\newtheorem{theorem}{Theorem}
\newtheorem{lemma}[theorem]{Lemma}
\newtheorem{corollary}[theorem]{Corollary}
\newtheorem{definition}[theorem]{Definition}
\newcommand{\cC}{\mathcal{C}}
\begin{document}

%\mainmatter

\title{Data Stability in Clustering: A Closer Look}

\author[1]{Shalev Ben-David}
\author[2]{Lev Reyzin}
\affil[1]{Department of Electrical Engineering and Computer Science \authorcr Massachusetts Institute of Technology \authorcr
77 Massachusetts Avenue \authorcr
Cambridge, MA 02139 \authorcr
\texttt{shalev@mit.edu} \authorcr $ $}
\affil[2]{Department of Mathematics, Statistics, and Computer Science \authorcr
University of Illinois at Chicago \authorcr
851 South Morgan Street \authorcr
Chicago, IL 60607 \authorcr
\texttt{lreyzin@math.uic.edu} }

\date{}
\maketitle

%%%%%%%%%%%%%%%%%%%%%%%%%%%%%%%%%%%%%%%%%%%%%%%
 
\begin{abstract}
We consider the model
introduced by Bilu and Linial~\cite{BL}, who study problems for which 
the optimal clustering does not change when
distances are perturbed. 
They show that even when a problem is NP-hard, 
it is sometimes possible to obtain efficient algorithms for instances resilient to 
certain multiplicative perturbations, e.g.\ on the order of $O(\sqrt{n})$ for max-cut clustering. 
Awasthi~et~al.~\cite{ABS10} consider
center-based objectives, and Balcan and Liang~\cite{Liang} analyze the $k$-median 
and min-sum objectives, giving efficient algorithms for instances resilient to 
certain {constant} multiplicative perturbations.

Here, we are motivated by the question of to what extent these assumptions can be relaxed
while allowing for efficient algorithms.
We show there is little room to improve these results
by giving NP-hardness lower bounds for 
both the $k$-median and min-sum objectives. 
On the other hand, we show that constant multiplicative resilience parameters
can be so strong as to make the clustering problem trivial,
leaving only a narrow range of resilience parameters for which clustering is interesting.
We also consider a model of additive perturbations and give a correspondence 
between additive and multiplicative notions of stability.
Our results provide a close examination of the consequences of assuming stability in data.

\end{abstract}

%%%%%%%%%%%%%%%%%%%%%%%%%%%%%%%%%%%%%%%%%%%%%%%

\section{Introduction}

Clustering is one of the most widely-used techniques in statistical data analysis.
The need to partition, or cluster, data into meaningful categories naturally arises in
virtually every domain where data is abundant.
Unfortunately, most of the natural clustering objectives, including $k$-median,
$k$-means, and min-sum, are NP-hard to optimize~\cite{GuhaK99,jain_new_2002}.
It is, therefore, unsurprising that many of the clustering algorithms used in practice come with
few guarantees.

Motivated by overcoming the hardness results, Bilu and Linial~\cite{BL} consider a
\textbf{perturbation resilience assumption} that they argue is often 
implicitly made when choosing
a clustering objective: that the optimum clustering
to the desired objective $\Phi$ is preserved under multiplicative perturbations up to a factor
$\alpha > 1$ to the distances between the points.
They reason that if the optimum clustering to an objective $\Phi$ is not resilient, as in, if small
perturbations to the distances can cause the optimum to change, then $\Phi$ may have been the
wrong objective to be optimizing in the first place. Bilu and Linial~\cite{BL} show that for max-cut clustering, instances resilient to perturbations of 
$\alpha = O(\sqrt{n})$ have efficient algorithms for recovering the optimum itself.

Continuing that line of research, Awasthi~et~al.~\cite{ABS10} give a polynomial time algorithm that 
finds the optimum clustering for instances resilient to multiplicative
perturbations of $\alpha = 3$ for center-based\footnote{For center-based clustering objectives, the clustering is defined by a choice of centers, and the objective is a function of the distances of the points to their closest center.} 
clustering objectives when centers must 
come from the data (we call this the \textbf{proper} setting), and
$\alpha = 2 + \sqrt{3}$ when when the centers do not need to (we call this the \textbf{Steiner} setting).
Their method relies on a \textbf{stability} property implied by perturbation resilience
(see Section~\ref{prelim}). For the Steiner case, they also prove an NP-hardness lower bound of $\alpha = 3$.
Subsequently, Balcan and Liang~\cite{Liang} consider the proper setting
and improve the constant past $\alpha = 3$ 
by giving a new polynomial time algorithm for the $k$-median objective for 
$\alpha = 1+\sqrt{2} \approx 2.4$ stable instances.

\subsection{Our results}

Our work further delves into the proper setting, for which no lower bounds have previously been shown for the 
stability property.
In Section~\ref{sec:lbs} we show that even in the proper case, where the algorithm is restricted to
choosing its centers from the data, 
for any $\epsilon > 0$,
it is NP-hard to optimally cluster $(2-\epsilon)$-stable instances, both for the \textbf{$k$-median} and
\textbf{min-sum} objectives (Theorems~\ref{thm:kclb} and~\ref{thm:mslb}).  
To prove this for the min-sum objective, we define a new notion of stability
that is implied by perturbation resilience, a notion that  may be of independent interest.

Then in Section~\ref{sec:conseq}, 
we look at the implications of assuming resilience or stability in the data, 
even for a constant perturbation parameter $\alpha$.  
We show that for even fairly small constants, 
the data begins to have very strong structural properties, as to make the clustering task fairly trivial.  
When $\alpha$ exceeds $2 + \sqrt{3}$, the data begins to
show what is called \textbf{strict separation}, where each point is closer to points in its own cluster
than to points in other clusters (Theorem~\ref{theorem:strictseparation}).

Finally, in Section~\ref{app:additivestability},
we look at whether the picture can be improved for clustering data that is stable under additive, 
rather than multiplicative, perturbations.  
One hope would be that \textbf{additive stability} is a more useful assumption,
where a polynomial time algorithm for $\epsilon$-stable instances might be possible.  
Unfortunately, this is not the
case.  We consider a natural additive model and show that severe lower bounds hold for 
the additive notion as well (Theorems~\ref{thm:addklb} and~\ref{thm:addmslb}).
On the positive side, we show via reductions that algorithms for multiplicatively stable data also work
for additively stable data for a different but related parameter.

Our results demonstrate that on the one hand, it is hard to improve the algorithms to
work for low stability constants, and that on the other hand, higher stability constants can be quite strong,
to the point of trivializing the problem.  Furthermore, switching from a multiplicative to an additive
stability assumption does not help to circumvent the hardness results, and perhaps makes matters worse.  
These results, taken together, narrow the range of interesting parameters for theoretical study
and highlight the strong role that the choice of constant plays in stability assumptions.

One thing to note that there is some difference between the very related resilience and
stability properties (see Section~\ref{prelim}), stability being weaker and more general~\cite{ABS10}. 
Some of our results apply to both notions, and some only to stability.  This still leaves
open the possibility of devising polynomial-time
algorithms that, for a much smaller $\alpha$, 
work on all the $\alpha$-perturbation resilient instances, but not on all $\alpha$-stable ones.

\subsection{Previous work}\label{sec:prevwork}

We examine previous work on stability, both as a data dependent assumption in clustering and in other
settings.

\subsubsection{Stability as a data assumption in clustering}

The classical approach in theoretical computer science to dealing with the worst-case NP-hardness of clustering
has been to develop efficient approximation algorithms for the various clustering
objectives~\cite{AroraRR98,AryaGKMMP04,BartalCR01,CharikarGTS02,KumarSS10,VegaKKR03},
and significant efforts have been exerted to improve approximation ratios and to prove lower
bounds.   In particular, for metric $k$-median, the best known guarantee is a
$(3+\epsilon)$-approximation~\cite{AryaGKMMP04}, and the best known lower bound is
$(1+1/e)$-hardness of approximation~\cite{GuhaK99,jain_new_2002}.
For metric min-sum, the best known result is a $O(\mathrm{polylog}(n))$-approximation 
to the optimum~\cite{BartalCR01}.

In contrast, a more recent direction of research has been to characterize under what conditions we can
find a desirable clustering efficiently.
Perturbation resilience/stability are such conditions,
but they are related to other stability notions in clustering.
Ostrovsky~et~al.~\cite{OstrovskyRSS06} demonstrate the effectiveness of Lloyd-type
algorithms~\cite{Lloyd82} on instances with the stability property that the cost of the optimal $k$-means
solution is small compared to the cost of the optimal $(k-1)$-means solution, and
their guarantees have later been improved by Awasthi~et~al.~\cite{AwasthiBS10}.

In a different line of work, Balcan~et~al.~\cite{BalcanBV08} consider what stability properties of a 
similarity function, with respect to the ground truth clustering, are sufficient to cluster well.
In a related direction,
Balcan~et~al.~\cite{BBG} argue that,
for a given objective $\Phi$,
approximation algorithms
are most useful when the clusterings they produce are structurally
close to the optimum originally sought in choosing to optimize $\Phi$ in the first place.
They then show that, for many objectives,
if one makes this assumption explicit -- that all $c$-approximations to the objective yield
a clustering that is  $\epsilon$-close to the optimum -- then
one can recover an $\epsilon$-close clustering in polynomial time, even
for values of $c$ below the hardness of approximation constant.
The assumptions and algorithms of Balcan~et~al.~\cite{BBG} have subsequently
been carefully analyzed by Schalekamp~et~al.~\cite{SchalekampYZ10}.

Ackerman and Ben-David~\cite{AckermanB09} also study various notions of resilience, and
among their results, introduce a notion of stability similar to the one studied herein, except 
only the positions of cluster centers are perturbed.
Their notion is strictly weaker -- i.e.\ any perturbation resilient instance is also stable
in their framework.
They show that Euclidean instances
stable to perturbations of cluster centers will have polynomial algorithms for finding near-optimal
clusterings.
However, they require the desired number of clusters to be small compared to the input size:
their algorithms have running times exponential in the number of clusters.
In this work, we do not make that assumption; this means our positive results are more general,
but our lower bounds don't apply.

\subsubsection{Stability in other settings}

Just as the Bilu and Linial~\cite{BL} notion of stability gives conditions under which efficient
clustering is possible, similar concepts have been studied in game theory. 
Lipton~et~al.~\cite{LiptonMM06} propose a notion of stability for solution concepts of games.
They define a game to be stable if small perturbations to the payoff matrix do not significantly
change the value of the game, and they show games are generally not stable under this definition.
Then, in a similar spirit to the work of Bilu and Linial, 
Awasthi~et~al.~\cite{AwasthiBBSV10} propose a related stability condition for a game,
which can be leveraged in
finding its approximate Nash equilibria.

The Bilu and Linial~\cite{BL} notion of stability has also been studied in the context of the metric traveling salesman
problem, for which Mihal{\'a}k~et~al.~\cite{MihalakSSW11} give efficient algorithms for $1.8$-perturbation 
resilient instances, illustrating another case where a stability assumption can circumvent NP-hardness.

From a different direction, Ben-David~et~al.~\cite{Ben-DavidLP06} consider
the stability of clustering algorithms, as opposed to instances.
They say an algorithm is stable if it produces similar clusterings for different inputs drawn from
the same distribution.  They argue that stability is not as useful a notion as had been previously 
thought in determining
various parameters, such as the optimal number of clusters.

\section{Notation and preliminaries}\label{prelim}
In a clustering instance, we are given a set $S$ of $n$ points in a finite metric space, and we denote
$d: S \times S \rightarrow \mathbb{R}_{\geq 0}$ as the distance function. $\Phi$ denotes the objective
function over a partition of $S$ into $k$ clusters which we want to optimize over the metric, i.e.\ $\Phi$ assigns
a score to every clustering.
The optimal clustering with respect to $\Phi$ is denoted as $\mathcal{C} = \{C_1,C_2,\dots, C_k\}$.

The \textbf{$k$-median objective} requires
$S$ to be partitioned into $k$ disjoint subsets $\{S_1,\dots, S_k\}$ and each subset $S_i$
to be assigned a center $s_i\in S$. The goal is to minimize $\Phi_\mathrm{med}$, 
measured by
$$\Phi_\mathrm{med}(S_1, \ldots, S_k) \doteq \sum_{i=1}^k\sum_{p\in S_i}d(p,s_i).$$
The centers in the optimal clustering are denoted as $c_1,\dots,c_k$.
In an optimal solution, each point is assigned to its nearest center.

For the \textbf{min-sum objective}, $S$ is partitioned into $k$ disjoint subsets,
and the objective is to minimize $\Phi_\mathrm{m-s}$, measured by
 $$\Phi_\mathrm{m-s}(S_1, \ldots, S_k) \doteq \sum_{i=1}^k\sum_{p, q\in S_i}d(p,q).$$

Now, we define the perturbation resilience notion introduced by Bilu and Linial~\cite{BL}.

\begin{definition}
For $\alpha > 1$, a clustering instance $(S,d)$ is \textbf{$\alpha$-perturbation resilient} to a given objective
$\Phi$ if for any function $d': S \times S \rightarrow \mathbb{R}_{\geq 0}$
such that $\forall p,q \in S,$ $$d(p,q) \leq d'(p,q) \leq \alpha d(p,q),$$ the %there is a unique 
optimal clustering
$\mathcal{C'}$ for $\Phi$ under $d'$ %and this clustering 
is equal to the optimal clustering
$\mathcal{C}$ for $\Phi$ under $d$.
\end{definition}

In this paper, we consider the $k$-median and min-sum objectives, and
we thereby investigate the following definitions of stability,
which are implied by perturbation resilience, as shown in
Sections~\ref{section-kmedian} and~\ref{minsumlb}.
The following definition is adapted from Awasthi~et~al.~\cite{ABS10}.

\begin{definition}\label{def:centerstable}
A clustering instance $(S,d)$ is \textbf{$\alpha$-center stable} for the $k$-median
objective if for any optimal cluster $C_i\in\mathcal{C}$ with center $c_i$, $C_j\in \mathcal{C}\ (j\neq i)$
with center $c_j$,  any point $p\in C_i$ satisfies $$\alpha d(p,c_i) < d(p,c_j).$$%\label{def:centerstability}
\end{definition}
Next, we define a new analogous notion of stability for the min-sum objective,
and we show in Section~\ref{minsumlb} that for the min-sum objective, perturbation resilience implies
min-sum stability.  
To help with exposition for the min-sum objective, 
we define the distance from a point $p$ to a set of points $A$,
$$d(p, A) \doteq \sum_{q\in A} d(p,q).$$
\begin{definition}\label{def:minsumstable}
A clustering instance $(S,d)$ is \textbf{$\alpha$-min-sum stable} for the
min-sum objective if for all optimal clusters
$C_i,C_j\in\mathcal{C}\ (j \neq i)$, any point $p \in C_i$ satisfies 
$$\alpha d(p,C_i) < d(p,C_j).$$\label{def:minsumstability}
\end{definition}
This is a useful generalization because, as we shall see,
known algorithms working under the perturbation resilience assumption 
can also be made to work under the weaker notion of min-sum stability.

\section{Lower bounds}\label{sec:lbs}

\subsection{The $k$-median objective}\label{section-kmedian}

Awasthi~et~al.~\cite{ABS10} prove the following connection between perturbation resilience and
stability. Both their algorithms and the algorithms of Balcan and Liang~\cite{Liang} crucially use this
stability assumption.

\begin{lemma}\label{kmedian-lemma-basic}
Any clustering instance that is $\alpha$-perturbation resilient for the $k$-median objective also
satisfies the $\alpha$-center stability.
\end{lemma}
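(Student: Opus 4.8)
The plan is to prove the contrapositive by constructing an explicit perturbation. Suppose the instance is \emph{not} $\alpha$-center stable for the $k$-median objective. Then there exist optimal clusters $C_i$ and $C_j$ with $j \neq i$, with centers $c_i$ and $c_j$, and a point $p \in C_i$ that violates the stability inequality, i.e.\ $\alpha\, d(p,c_i) \geq d(p,c_j)$. The goal is to exhibit a perturbation $d'$ with $d(p,q) \leq d'(p,q) \leq \alpha\, d(p,q)$ for all $p,q$ under which the clustering $\mathcal{C}$ is no longer the unique optimum, thereby contradicting $\alpha$-perturbation resilience.

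The key idea is to stretch all distances uniformly by the factor $\alpha$ \emph{except} the distances incident to the violating point $p$, which we leave unchanged (or scale less aggressively). Concretely, I would set $d'(x,y) = \alpha\, d(x,y)$ for every pair $\{x,y\}$ not involving $p$, and $d'(p,y) = d(p,y)$ for every $y$. First I would check that $d'$ is a legal perturbation: each edge satisfies $d \leq d' \leq \alpha d$ by construction, so the admissibility constraint holds trivially. The substantive step is to compare, under $d'$, the cost of keeping $p$ in $C_i$ against the cost of moving $p$ to $C_j$ while leaving all other points in their original clusters. Under $d'$, the cost contribution of $p$ when assigned to its original center $c_i$ is $d'(p,c_i) = d(p,c_i)$, whereas its contribution if reassigned to $c_j$ is $d'(p,c_j) = d(p,c_j) \leq \alpha\, d(p,c_i)$; and since every other point's distance to its own center has been scaled by exactly $\alpha$, the relative comparison now favors (or at least ties) the reassignment of $p$. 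This shows the clustering that places $p$ with $C_j$ is at least as good as $\mathcal{C}$ under $d'$, breaking the uniqueness (and optimality) of $\mathcal{C}$.

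The main obstacle I anticipate is handling the boundary case carefully, since the stability definition uses a strict inequality ($\alpha\, d(p,c_i) < d(p,c_j)$) while its negation gives only $\alpha\, d(p,c_i) \geq d(p,c_j)$. If the inequality is strict, i.e.\ $d(p,c_j) < \alpha\, d(p,c_i)$, then the alternative clustering is \emph{strictly} cheaper under $d'$, directly contradicting that $\mathcal{C}$ is optimal for $d'$. In the equality case $d(p,c_j) = \alpha\, d(p,c_i)$, the alternative clustering merely ties $\mathcal{C}$, which still contradicts \emph{uniqueness} of the optimum demanded by perturbation resilience. I would also verify that reassigning $p$ to $c_j$ does not accidentally create a cheaper assignment for $p$ to some third center, but this is immediate since $c_j$ is a valid center and we only need one clustering distinct from $\mathcal{C}$ that is optimal. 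A minor technical point worth confirming is that leaving $p$'s edges unscaled is compatible with the metric structure required of $d'$ — since the definition only requires $d'$ to be a nonnegative function satisfying the sandwiching bounds, not necessarily a metric, this causes no difficulty.
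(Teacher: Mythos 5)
Your perturbation goes in the wrong direction, and the key step built on it is false. You leave all of $p$'s distances unscaled and multiply every distance \emph{not} involving $p$ by $\alpha$. But when you compare, under $d'$, the clustering $\mathcal{C}$ against the clustering that reassigns $p$ to $C_j$ (centers held fixed), every term except $p$'s own contribution appears identically in both costs and cancels; the difference is exactly $d'(p,c_j) - d'(p,c_i) = d(p,c_j) - d(p,c_i) \ge 0$, i.e.\ the same comparison as under the unperturbed metric, where $p$ (strictly) prefers $c_i$. The inequality you derive, $d'(p,c_j) \le \alpha\, d'(p,c_i)$, is not the one you need, which is $d'(p,c_j) \le d'(p,c_i)$: scaling \emph{other} points' distances cannot "favor the reassignment" because those terms are common to both solutions. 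Equivalently, your $d'$ is just $\alpha$ times the metric obtained from $d$ by shrinking all of $p$'s distances uniformly by $1/\alpha$, and a uniform shrinking of $p$'s distances never changes which center $p$ prefers. Concretely, with $\alpha = 2$, $d(p,c_i)=1$, $d(p,c_j)=1.5$, center stability is violated, yet under your $d'$ moving $p$ to $C_j$ strictly \emph{increases} the cost, so no contradiction with resilience is obtained.

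The fix is to perturb in the opposite direction, which is what the paper does in its analogous arguments (Lemma~\ref{lem:mulptos} for min-sum, and Lemma~\ref{lem:addptos} in Appendix~\ref{app:stab} for the additive $k$-median case, following \cite{ABS10}): leave all distances alone except that the distances \emph{inside $p$'s own cluster} are multiplied by $\alpha$. Then $d'(p,c_i) = \alpha\, d(p,c_i) \ge d(p,c_j) = d'(p,c_j)$, so the clustering that moves $p$ into $C_j$ (keeping centers $c_i, c_j$, which is legal since the violated inequality forces $p \neq c_i$) costs at most as much as $\mathcal{C}$ under $d'$, contradicting the uniqueness of the optimum required by $\alpha$-perturbation resilience. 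Note also a $k$-median-specific subtlety your write-up ignores: after a perturbation, the optimal center of $C_i$ need not remain $c_i$, which would break the cost accounting; the clean way to handle this is to scale \emph{all} intra-$C_i$ distances by $\alpha$ (not only those incident to $p$), so that every candidate center's cost for $C_i$ scales uniformly and $c_i$ remains its center --- exactly the device used in the paper's appendix proof.
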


Awasthi~et~al.~\cite{ABS10} proved that for $\alpha < 3 - \epsilon$, $k$-median 
clustering $\alpha$-center stable instances is NP-hard when Steiner points are allowed in the 
data.  Afterwards, Balcan and Liang~\cite{Liang} circumvented this lower bound and achieved
a polynomial time algorithm for $\alpha = 1 + \sqrt{2}$ by assuming the 
algorithm must choose cluster centers from within the data.

In the theorem below, we prove a lower bound for the center stable property in this more restricted setting, showing there is little hope of progress even for data 
where each point is nearly twice closer to its own center than to any other.

\begin{theorem}\label{thm:kclb}
For any $\epsilon > 0$, the problem of solving $(2 - \epsilon)$-center stable $k$-median
instances is NP-hard.
\end{theorem}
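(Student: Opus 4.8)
The plan is to prove NP-hardness by reduction from a known NP-hard problem, choosing a source problem whose combinatorial structure forces clusters to be well-separated so that the $(2-\epsilon)$-center stability condition holds automatically on the ``yes'' instances. A natural candidate is a restricted version of set cover or exact cover (e.g.\ \textbf{Dominating Set} or \textbf{3-Dimensional Matching}), since $k$-median is morally a covering problem: choosing $k$ centers to serve all points mirrors choosing $k$ sets to cover a universe. First I would fix a target gadget metric in which the intended optimal clustering has every point lying at distance $1$ from its assigned center, while every non-assigned center sits at distance strictly greater than $2-\epsilon$. Concretely, I would build an unweighted graph, take shortest-path distances as the metric, and arrange the gadget so that intra-cluster radii are $1$ and the nearest foreign center is at graph-distance $2$ or more; truncating or scaling distances lets me push the separation ratio arbitrarily close to (but below) $2$, which is exactly what ``$(2-\epsilon)$ for any $\epsilon>0$'' demands.

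The key steps, in order, would be: (1) describe the reduction, mapping an instance of the source problem to a point set $S$, a metric $d$, and a parameter $k$; (2) prove the forward direction, that a ``yes'' instance yields a $k$-median clustering of a specified cost whose optimal solution satisfies Definition~\ref{def:centerstability} with parameter $2-\epsilon$, so that the constructed instance is a legal $(2-\epsilon)$-center stable input; (3) prove the reverse direction, that any clustering of cost at most the target threshold decodes back to a solution of the source problem, establishing that solving the $k$-median instance optimally solves the original NP-hard problem. The technical heart is showing the optimal clustering is \emph{unique} and exhibits the required separation: I would argue that any deviation from the intended assignment strictly increases cost, pinning down the optimum, and then verify the distance inequality $\,(2-\epsilon)\,d(p,c_i) < d(p,c_j)\,$ directly from the gadget's distance bounds.

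The main obstacle I anticipate is the simultaneous satisfaction of two competing requirements. The stability condition demands large inter-cluster separation (ratio approaching $2$), which tends to make the clustering problem \emph{easier} and thus threatens to destroy the hardness; yet the reduction must preserve enough combinatorial freedom that recovering the optimum still encodes a genuinely hard decision. Balancing these means engineering the metric so that the \emph{only} low-cost clustering corresponds to a valid cover/matching, while every such clustering is forced to be highly stable --- the hardness must live entirely in \emph{finding} the separated clustering, not in any ambiguity once it is found. A secondary subtlety is ensuring $d$ is a genuine metric (triangle inequality) after any truncation or scaling used to tune the ratio below $2$; shortest-path metrics handle this cleanly, so I would lean on that construction and choose edge lengths so the separation ratio is a controllable function of a gadget parameter that I drive toward $2^-$ as $\epsilon \to 0$.
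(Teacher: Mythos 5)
Your high-level plan matches the paper's proof in outline: the paper also reduces from a dominating-set-type problem, also uses a two-level metric (distance $1/2$ across edges, $1$ across non-edges, which is trivially a metric), sets $k=d$, and uses the cost threshold $(n-k)/2$ to decode centers back into a dominating set. But there is a genuine gap at what you yourself call "the technical heart": you never supply the mechanism that guarantees the instances produced by the reduction actually satisfy $(2-\epsilon)$-center stability. If the source problem is plain \textbf{Dominating Set}, the reduction fails: in a graph whose small dominating sets are not perfect, some non-center vertex is adjacent to two chosen centers, so it sits at distance $1/2$ from its own center \emph{and} from a foreign center, giving ratio $1$ rather than $\approx 2$. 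The resulting instance is then not center stable, an algorithm promised to work only on stable instances may output garbage on it, and the reduction proves nothing. Your step (2) simply asserts that yes-instances yield stable optima; for Dominating Set that assertion is false, and for 3DM you would still need to design and analyze a gadget that forces it.

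The paper's resolution is precisely the idea missing from your proposal: it introduces a new source problem, the \textbf{perfect dominating set promise problem} (PDS-PP), in which one is \emph{promised} that every dominating set of size at most $d$ is perfect (each outside vertex has exactly one neighbor in the set), and it proves this promise problem NP-hard by a separate reduction from 3DM (Theorem~\ref{thm:sdom}, Appendix~\ref{app:pdspp}) whose construction --- element vertices, triple vertices, and one apex vertex $v$ --- ensures the promise holds. The perfectness promise is exactly equivalent to the separation you need: every non-center point is at distance $1/2$ from exactly one center and $1$ from all others, so stability comes for free on \emph{all} instances the reduction emits, not just on an intended solution. Two smaller remarks: first, your plan to "truncate or scale distances to push the ratio below $2$" is unnecessary and slightly misframed --- the gadget achieves ratio exactly $2$, and the instance is then $(2-\epsilon)$-center stable for every $\epsilon>0$ because the stability inequality is strict; second, shortest-path metrics are more machinery than needed, since the two-valued metric $\{1/2,1\}$ satisfies the triangle inequality for any graph.
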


\begin{proof} 
We reduce from what we call the perfect dominating set promise problem (PDS-PP), 
which we prove to be NP-hard (see Appendix), 
where we are promised that the input graph $G = (V,E)$ is such that all of its
smallest dominating sets $D$ are perfect, and we are asked to find a dominating set of size
at most $d$.
The reduction is simple.
We take an instance of the NP-hard problem PDS-PP on $G=(V,E)$ on
$n$ vertices and reduce it to an $\alpha =  2 - \epsilon$-center stable instance.
Our distance
metric as follows.
Every vertex $v \in V$ becomes a point in the $k$-center instance.
For any two vertices $(u,v) \in E$ we define $d(u,v) = 1/2$. When $(u,v) \notin E$,
we set $d(u,v) = 1$.
This trivially satisfies the triangle inequality for any graph $G$, as the sum of the distances
along any two edges is at least $1$.  We set $k = d$.

We observe that a $k$-median solution of cost $(n - k)/2$ corresponds to a dominating
set of size $d$ in the PDS-PP instance, and is therefore NP-hard to find.
We also observe that because all solutions
of size $\le d$ in the PDS-PP instance are perfect, each (non-center)
point in the $k$-median solution
has distance $1/2$ to exactly one (its own) center, and a distance of $1$
to every other center.  Hence, this instance is $\alpha =  (2 - \epsilon)$-center stable,
completing the proof.
\end{proof}

\subsection{The min-sum objective}\label{minsumlb}

Analogously to Lemma~\ref{kmedian-lemma-basic}, we can show that $\alpha$-perturbation
resilience implies our new notion of $\alpha$-min-sum stability.

\begin{lemma}\label{lem:mulptos}
If a clustering instance is $\alpha$-perturbation resilient, then it is also $\alpha$-min-sum stable.
\end{lemma}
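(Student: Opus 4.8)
The plan is to argue by contradiction, manufacturing an explicit perturbation $d'$ that destroys the uniqueness (or optimality) of $\mathcal{C}$ whenever min-sum stability fails, in direct analogy with how Lemma~\ref{kmedian-lemma-basic} is established for the $k$-median objective. So suppose $(S,d)$ is $\alpha$-perturbation resilient but \emph{not} $\alpha$-min-sum stable. Negating Definition~\ref{def:minsumstable}, there are distinct optimal clusters $C_i, C_j \in \mathcal{C}$ and a point $p \in C_i$ with $\alpha\, d(p, C_i) \geq d(p, C_j)$.

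The key structural observation I would use is that, for the min-sum objective, reassigning the single point $p$ from $C_i$ to $C_j$ changes only the within-cluster pairs that involve $p$. Writing $\mathcal{C}_p$ for the clustering obtained from $\mathcal{C}$ by this one reassignment, the difference in cost under any distance function $\delta$ is, up to the fixed multiplicative constant coming from counting ordered pairs, exactly $\delta(p, C_j) - \delta(p, C_i)$: every pair not containing $p$ lands in the same cluster in both clusterings, and every pair joining $p$ to a point outside $C_i \cup C_j$ is a between-cluster pair in both, so all such pairs contribute identically.

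First I would define the perturbation that exploits this localization. Crucially, the definition of perturbation resilience asks only that $d'$ be a \emph{function} with $d(x,y) \le d'(x,y) \le \alpha\, d(x,y)$, not that it be a metric, so I am free to inflate a few distances in isolation. I would set $d'(p,q) = d'(q,p) = \alpha\, d(p,q)$ for every $q \in C_i$ and leave $d'(x,y) = d(x,y)$ on all remaining pairs. This is a legitimate $\alpha$-perturbation, and it yields $d'(p, C_i) = \alpha\, d(p, C_i)$ while keeping $d'(p, C_j) = d(p, C_j)$. Combining this with the localization observation, the cost of $\mathcal{C}_p$ minus the cost of $\mathcal{C}$ under $d'$ is proportional to $d'(p, C_j) - d'(p, C_i) = d(p, C_j) - \alpha\, d(p, C_i) \le 0$, where the inequality is precisely the failure of stability. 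Hence $\mathcal{C}_p$ has cost no larger than $\mathcal{C}$ under $d'$, so $\mathcal{C}$ is not the unique optimum for $\Phi$ under $d'$, contradicting $\alpha$-perturbation resilience.

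The step I expect to be the main obstacle is pinning down the cost-difference localization cleanly while keeping the perturbation legitimate; the feature that $d'$ may violate the triangle inequality is exactly what makes the construction possible, so I would stress that we are only stretching distances within the allowed multiplicative window rather than exhibiting a new metric. Degenerate configurations are subsumed by the same inequality: even if $C_i = \{p\}$ (so $d(p,C_i)=0$) or the improvement is not strict, a cost difference of $\le 0$ still contradicts the required \emph{uniqueness} of the optimal clustering under $d'$, which finishes the argument.
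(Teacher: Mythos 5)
Your proposal is correct and follows essentially the same route as the paper's own proof: the identical perturbation (inflating only the distances from $p$ to points of $C_i$ by the factor $\alpha$, leaving everything else unchanged) followed by the identical swap of $p$ into $C_j$ to contradict perturbation resilience. If anything, you are slightly more careful than the paper, which asserts the swapped clustering is strictly \emph{cheaper} when the failed-stability inequality only guarantees cost no larger; your observation that a tie already violates the required \emph{uniqueness} of the optimum under $d'$ closes that small gap.
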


\begin{proof}
Assume to the contrary that the instance is $\alpha$-perturbation resilient but is not
$\alpha$-min-sum stable.  Then, there exist clusters $C_i, C_j$ in the optimal solution
$\mathcal{C}$ and a point $p \in C_i$ such that $\alpha d(p,C_i) \ge d(p,C_j)$.
We perturb $d$ as follows.  We define $d'$ such that for all points $q \in C_i$, $d'(p,q)
= \alpha d(p,q)$, and for the remaining distances, $d' = d$.  Clearly $d'$ is an
$\alpha$-perturbation of $d$.

We now note that  $\mathcal{C}$ is not optimal under $d'$. Namely, we can create a cheaper
solution $\mathcal{C'}$ that assigns point $p$ to cluster $C_j$, and leaves the remaining clusters
unchanged, which contradicts optimality of $\mathcal{C}$. This shows that $\mathcal{C}$ is not the
optimum under $d'$ which contradicts the instance being $\alpha$-perturbation resilient.
Therefore we can conclude that if a clustering instance is $\alpha$-perturbation resilient,
then must also be $\alpha$-min-sum
stable.
\end{proof}

Moreover, we show in the Appendix that the min-sum algorithm of Balcan and Liang~\cite{Liang}, 
which requires $\alpha$ to be bounded from below by
$3\left(\frac{\max_{C \in \mathcal{C}}|C|}{\min_{C \in \mathcal{C}}|C|-1}\right)$, 
works with this more general condition.
This further motivates following bound.

\begin{theorem}\label{thm:mslb}
For any $\epsilon > 0$, the problem of finding an optimal min-sum $k$ clustering in
$(2 - \epsilon)$-min-sum stable instances is NP-hard.
\end{theorem}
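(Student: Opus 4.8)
The plan is to mirror the reduction used for the $k$-median lower bound (Theorem~\ref{thm:kclb}), replacing the perfect dominating set structure with one tailored to the min-sum objective. I would again place points in a two-valued metric: encode a graph $G = (V,E)$ by making each vertex a point, setting $d(u,v) = 1/2$ when $(u,v) \in E$ and $d(u,v) = 1$ otherwise. As before the triangle inequality holds since any two distances sum to at least $1$. The intended optimal min-sum clustering corresponds to a partition of $V$ into $k$ cliques, since within a cluster the objective pays $1/2$ per adjacent pair and $1$ per non-adjacent pair, so the cheapest way to form clusters of prescribed sizes is to make them cliques. The hardness would then come from the hardness of finding such a partition.

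The key computation is that the new notion of min-sum stability (Definition~\ref{def:minsumstable}) behaves very well on such instances. If the optimal clustering consists of $k$ cliques of equal size $m$ with no edges between distinct cliques, then for a point $p$ in its own cluster $C_i$ one has $d(p,C_i) = (m-1)/2$, while for any other cluster $C_j$ one has $d(p,C_j) = m$. Hence $d(p,C_j)/d(p,C_i) = 2m/(m-1) > 2 > 2-\epsilon$, so the instance is automatically $(2-\epsilon)$-min-sum stable for every clique size $m \ge 2$. Unlike the center-stability setting, the target ratio is easy to clear once clusters are isolated equal-size cliques; the entire difficulty is concentrated in making the problem hard while keeping the clusters close to this ideal.

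Since a disjoint union of equal cliques can be partitioned trivially by inspecting connected components, the reduction must start from instances that genuinely contain cross-cluster edges, and this is exactly where min-sum departs from $k$-median. For $k$-median only point-to-center distances matter, so the perfectness promise in PDS-PP suffices; for min-sum every intra-cluster pair contributes, and each neighbor that $p$ has inside a foreign cluster $C_j$ lowers $d(p,C_j)$ by $1/2$. Quantitatively, if $p$ has $t$ neighbors in $C_j$ then $d(p,C_j) = m - t/2$, and the ratio $(2m-t)/(m-1)$ stays above $2-\epsilon$ precisely when $t < \epsilon m + O(1)$. I would therefore reduce from a ``perfect'' clique-partition promise problem -- the min-sum analogue of PDS-PP -- in which YES instances are promised to admit a balanced partition into $k$ cliques of a large size $m = m(\epsilon)$ such that every vertex has only $o(m)$ neighbors outside its own clique. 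The promise guarantees both that the optimal min-sum clustering is this clique partition (its cost $k\binom{m}{2}/2$ being the unique minimum) and that the resulting instance meets Definition~\ref{def:minsumstable} with $\alpha = 2-\epsilon$.

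The main obstacle, and the step I expect to require the most care, is establishing NP-hardness of this promise problem in the regime needed above -- large, balanced clique sizes together with sparse cross-clique adjacency -- in direct analogy with the hardness of PDS-PP proved as Theorem~\ref{thm:sdom}. Controlling the per-vertex cross-clique degree is delicate because enforcing it locally (e.g.\ via a raw degree bound) tends to trivialize the partition problem, so the bound must be imposed through the global structural promise rather than through the instance itself. Once the hardness of the promise problem is in hand, the correspondence between its solutions and min-sum clusterings of the minimum cost, combined with the stability computation above, yields that optimally clustering $(2-\epsilon)$-min-sum stable instances is NP-hard.
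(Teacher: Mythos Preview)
Your metric and your stability computation are exactly right, and you have correctly isolated the crux: you need an NP-hard clique-partition problem whose YES instances have bounded cross-clique degree. The gap is that you stop at precisely this point, treating it as an open obstacle and even asserting that ``enforcing it locally (e.g.\ via a raw degree bound) tends to trivialize the partition problem.'' That belief is what blocks you, and it is not quite correct.

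The paper's proof fills the gap with the following off-the-shelf fact: the triangle partition problem (partition $V$ into $n/3$ triangles) is NP-hard even on graphs of maximum degree~$4$. Plugging $m=3$ into your own formula, a point $p$ in its triangle has $d(p,C_i)=1$; the degree bound leaves $p$ at most two neighbours in any other triangle $C_j$, so $d(p,C_j)\ge 2\cdot\tfrac12+1=2$, and the ratio is $\ge 2>2-\epsilon$ for every $\epsilon>0$. Thus no dependence of $m$ on $\epsilon$ is needed, and no bespoke promise problem has to be manufactured---the cross-clique degree control comes for free from the degree-$4$ restriction, which does \emph{not} trivialize the problem (degree~$3$ does, as the paper notes, which is why the bound cannot be pushed past $2$ by this route). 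Once you know this source problem, the rest of your argument goes through verbatim: a min-sum cost of exactly $n$ witnesses a triangle partition, and the instance is $(2-\epsilon)$-min-sum stable.
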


\begin{proof}
Consider the  \textbf{triangle partition problem}.  
Let graph $G = (V,E)$ and $|V| = n = 3k$, and let each vertex have maximum
degree of $d=4$. The problem
of whether the vertices of $G$ can be partitioned into sets $V_1, V_2, \ldots, V_k$
such that each $V_i$ contains a triangle in $G$ is
NP-complete~\cite{Garey_johnson_1979}, even with the degree restriction~\cite{rooij_2011}.

We reduce the triangle partition problem to an $(2 - \epsilon)$-min-sum stable 
clustering instance.  The metric is as follows.
Every vertex $v \in V$ becomes a point in the min-sum instance.
For any two vertices $(u,v) \in E$ we define $d(u,v) = 1/2$. When $(u,v) \notin E$,
we set $d(u,v) = 1$.
This  satisfies the triangle inequality for any graph, as the sum of the distances
along any two edges is at least $1$.

Now we show that we can cluster this instance into $k$ clusters
such that the cost of the min-sum objective is exactly $n$
if and only if the original instance is a YES instance of triangle partition.
This follows
from two facts.  
\begin{enumerate}
\item A YES instance of triangle partition maps to a clustering into $k = n/3$
clusters of size $3$ with pairwise distances $1/2$, for a total
cost of $n$
\item A cost of $n$ is the best achievable because a balanced clustering with all minimum pairwise intra-cluster distances is optimal.
In particular,
$\frac{1}{2} \sum_i n_i (n_i-1)$ s.t. $\sum_i n_i = n$ is a lower bound on the cost of any $k$-clustering with cluster-sizes ${n_i}$.  
By convexity this expression is minimized at $n$.
\end{enumerate}

In the clustering from our reduction, each point has a sum-of-distances
to its own cluster of $1$. 
Now we examine the sum-of-distances of any point to other clusters.  A point
has two distances of $1/2$ (edges) to its own cluster, and because $d=4$, it can have at most two more distances of $1/2$ (edges)
into any other cluster, leaving the third distance to the other cluster to be $1$,
yielding a total cost of
$\ge 2$ into any other cluster.  Hence, it is
$(2 - \epsilon)$-min-sum stable.  
%So, an algorithm for solving
%$(2 - \epsilon)$-min-sum stable instances can be used to solve the triangle partition problem.
\end{proof}

We note that it is tempting to restrict the degree bound to $3$ in order to further
improve the lower bound.  Unfortunately, the triangle partition problem on
graphs of maximum degree $3$ is polynomial-time solvable~\cite{rooij_2011}, and
we cannot improve the factor of $2-\epsilon$ by restricting to graphs of degree $3$
in this reduction.

\section{Strong consequences of stability}\label{sec:conseq}

In Section~\ref{sec:lbs}, we showed that $k$-median clustering even $(2-\epsilon)$-center stable
instances is $NP$-hard. 
In this section we show that even for resilience to constant multiplicative perturbations of 
 $\alpha > 2 + \sqrt{3} \approx 3.7$, the data obtains a property referred to as
 \textbf{strict separation}, where all points are closer to all other points in their own
cluster than to points in any other cluster; this property is known to be helpful in 
clustering~\cite{BalcanBV08}. 

\subsection{Strict separation}

Now we prove the following theorem, which shows that even for relatively small multiplicative 
constants for $\alpha$, center stable, and therefore perturbation resilient, instances 
exhibit strict separation.

\begin{theorem}\label{theorem:strictseparation}
Let $\mathcal{C} = \{C_1, \ldots, C_k\}$ be the optimal clustering of an 
$\alpha$-center stable instance with $\alpha > 2+\sqrt{3}$.
Let $p, p' \in C_i$ and $q \in C_j$ ($i \neq j$), then $d(p,q)~>~d(p,p').$
\end{theorem}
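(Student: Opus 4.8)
The plan is to fix the center $c_i$ of the cluster $C_i$ containing both $p$ and $p'$, set $a = d(p,c_i)$ and $a' = d(p',c_i)$, and play the trivial upper bound $d(p,p') \le d(p,c_i) + d(c_i,p') = a + a'$ against a pair of lower bounds on $d(p,q)$ extracted from Lemma~\ref{l:m}. First I would record the algebraic fact that drives the whole argument: since $\alpha = \tfrac12(5+\sqrt{41})$ is the positive root of $\alpha^2 - 5\alpha - 4 = 0$, we have $\alpha^2 = 5\alpha + 4$, and hence $\frac{\alpha(\alpha-1)}{\alpha+1} = \frac{\alpha^2 - \alpha}{\alpha+1} = \frac{4\alpha+4}{\alpha+1} = 4$. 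So the constant appearing in Lemma~\ref{l:m} is exactly $4$ at this value of $\alpha$; this calibration is what makes the estimates close.

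Next I would apply Lemma~\ref{l:m} twice, each time taking $q \in C_j$ as the out-of-cluster point and pivoting through $c_i$. Applied to the pair $(p,q)$ it gives $d(c_i,q) > 4\,d(c_i,p) = 4a$, and the triangle inequality $d(p,q) \ge d(c_i,q) - d(c_i,p)$ then yields $d(p,q) > 3a$. Applied instead to the pair $(p',q)$ it gives $d(c_i,q) > 4a'$, and the same triangle inequality (still subtracting $d(c_i,p) = a$) yields $d(p,q) > 4a' - a$. These are the two lower bounds I will use.

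Finally I would split on the size of $a'$ relative to $a$. If $a' \le 2a$, the first bound suffices: $d(p,q) > 3a \ge a + a' \ge d(p,p')$. If instead $a' > 2a$, the second bound suffices, since $4a' - a - (a + a') = 3a' - 2a > 0$ gives $d(p,q) > 4a' - a > a + a' \ge d(p,p')$. The two cases exhaust all possibilities, with the degenerate case $a = 0$ (i.e.\ $p = c_i$) falling under the second, completing the argument.

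The step I expect to be the main obstacle is recognizing that a single pivot inequality does not suffice: controlling $d(p,q)$ only through $d(p,c_i)$ handles the contribution of $a$ to $d(p,p')$ but says nothing when $p'$ is the point lying far from the center, so the symmetric second application of Lemma~\ref{l:m} anchored at $p'$ is essential. Checking that the two resulting bounds overlap with no gap between the regimes $a' \le 2a$ and $a' > 2a$ is precisely where the identity $\frac{\alpha(\alpha-1)}{\alpha+1} = 4$, and hence the exact constant $\tfrac12(5+\sqrt{41})$, gets used.
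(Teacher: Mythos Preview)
Your argument is correct. The identity $\tfrac{\alpha(\alpha-1)}{\alpha+1}=4$ at $\alpha=\tfrac12(5+\sqrt{41})$ is exactly what the paper uses, and your two applications of Lemma~\ref{l:m} together with the triangle inequality and the case split $a'\le 2a$ versus $a'>2a$ close cleanly.

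The route differs from the paper's in packaging rather than in substance. The paper does not work with $p'$ directly; instead it introduces the auxiliary point $p_f=\arg\max_{r\in C_i}d(p,r)$, applies Lemma~\ref{l:m} to the pairs $(p,q)$ and $(p_f,q)$, \emph{adds} the two inequalities, and then uses $d(p,c_i)\le d(p,p_f)$ (valid because $c_i\in C_i$) to obtain $d(p,q)>\bigl(\tfrac{\alpha(\alpha-1)}{2(\alpha+1)}-1\bigr)d(p,p_f)\ge d(p,p')$ in one stroke, with no case analysis. You instead apply Lemma~\ref{l:m} to $(p,q)$ and $(p',q)$, keep the two lower bounds on $d(p,q)$ separate, and take whichever dominates depending on the ratio $a'/a$. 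Both approaches hinge on the same two invocations of Lemma~\ref{l:m} and the same algebraic threshold; the paper trades your case split for the extra observation $d(p,c_i)\le d(p,p_f)$, while your version avoids introducing $p_f$ altogether. Neither argument is stronger than the other, and both are tight at $\alpha=\tfrac12(5+\sqrt{41})$.
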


\begin{proof}
Let $\cC$ be an $\alpha$-center stable clustering.
Let $p,p'$ have center $c_1$ in $\cC$
and let $q$ have center $c_2$, with $c_1\neq c_2$.
We have
\begin{equation}\label{firstequation}
d(p,p')\leq d(p',c_1) + d(p,c_1) \leq d(p',c_1) + \frac{1}{\alpha} d(p,c_2)
\end{equation}
where the second inequality follows from the definition of $\alpha$-center stability.
Note that
$$d(p',c_1)\leq \frac{1}{\alpha}d(p',c_2),$$
and subtracting $\frac{1}{\alpha}d(p',c_1)$ from both sides gives
$$\left(1-\frac{1}{\alpha}\right)d(p',c_1)\leq\frac{1}{\alpha}(d(p',c_2)-d(p',c_1))\leq\frac{1}{\alpha}d(c_1,c_2).$$
This then implies
\begin{eqnarray*}
d(p',c_1) &\leq& \frac{1}{\alpha-1}d(c_1,c_2) \\
&\leq &\frac{1}{\alpha-1}d(p,c_1) + \frac{1}{\alpha-1}d(p,c_2) \\
&\leq& \frac{1}{\alpha(\alpha-1)}d(p,c_2)+\frac{1}{\alpha-1}d(p,c_2)\\
&=& \frac{\alpha+1}{\alpha(\alpha-1)}d(p,c_2).
\end{eqnarray*}
Plugging this into Equation~\ref{firstequation} gives
\begin{eqnarray*}
d(p,p') &\leq& \frac{\alpha+1}{\alpha(\alpha-1)}d(p,c_2) + \frac{1}{\alpha}d(p,c_2)\\
&=& \frac{2\alpha}{\alpha(\alpha-1)}d(p,c_2)\\
&=& \frac{2}{\alpha-1}d(p,c_2).
\end{eqnarray*}
We also have
\begin{eqnarray*}
d(p,c_2) &\leq& d(p,q)+d(q,c_2) \\
&\leq & d(p,q) + \frac{1}{\alpha}d(q,c_1)\\
&\leq& d(p,q)+\frac{1}{\alpha}(d(p,q)+d(p,c_1))\\
&=& \frac{\alpha+1}{\alpha}d(p,q)+\frac{1}{\alpha}d(p,c_1)\\
&\leq& \frac{\alpha+1}{\alpha}d(p,q)+\frac{1}{\alpha^2}d(p,c_2),
\end{eqnarray*}
and subtracting $\frac{1}{\alpha^2}d(p,c_2)$ from both sides gives
$$\frac{\alpha^2-1}{\alpha^2}d(p,c_2)\leq \frac{\alpha+1}{\alpha}d(p,q)$$
or
$$d(p,c_2)\leq \frac{\alpha}{\alpha-1}d(p,q).$$
We then conclude
\begin{eqnarray*}
d(p,p') &\leq& \frac{2}{\alpha-1}d(p,c_2) \\
&\leq& \left( \frac{2}{\alpha-1} \right) \left( \frac{\alpha}{\alpha-1} \right) d(p,q)\\
&=& \frac{2\alpha}{(\alpha-1)^2}d(p,q).
\end{eqnarray*}
Thus we get $d(p,p')<d(p,q)$ if we set $\alpha$ such that $2\alpha<(\alpha-1)^2$. Solving this gives $\alpha> 2+\sqrt{3}$.
\end{proof}

The following example shows that Theorem~\ref{theorem:strictseparation} is tight, as 
in lower values of $\alpha$ cannot alone imply strict separation.
Consider the metric space $\mathbb{R}$, the line.
The example is given by $p'=0$, $c_1=\sqrt{3}$, $p=1+\sqrt{3}$,
$q=2+2\sqrt{3}$, and $c_2=3+3\sqrt{3}$, with $p$ and $p'$ belonging
to the cluster of $c_1$ and $q$ belonging to the cluster of $c_2$.
This example is $\alpha$-center stable for $\alpha = 2+\sqrt{3}$,
but it does not satisfy strict separation. In particular, this example
makes every inequality in the proof above tight when $\alpha = 2+\sqrt{3}$.

The strict separation property, however, is quite strong, as can be seen from the following Corollary.

\begin{corollary}\label{c:streaming}
Let $\mathcal{C} = \{C_1, \ldots, C_k\}$ be the optimal clustering of a 
$2+\sqrt{3}$-center stable instance.
Any algorithm that chooses
centers $\{c'_1, \ldots, c'_k\}$ such that $c'_i \in C_i$ induces the partition 
$\mathcal{C}$ when points are assigned to their closest centers.
\end{corollary}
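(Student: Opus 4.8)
The plan is to show that under the strict separation property established in Theorem~\ref{theorem:strictseparation}, choosing any representative $c'_i \in C_i$ from each true cluster suffices to recover the correct partition, because every point is classified correctly by the nearest-representative rule. The argument is essentially a direct consequence of strict separation, so the main work is bookkeeping rather than new inequalities.

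First I would fix an arbitrary point $p$ and suppose, without loss of generality, that $p \in C_i$. The goal is to argue that among the chosen representatives $\{c'_1,\ldots,c'_k\}$, the point $c'_i$ is strictly closer to $p$ than any $c'_j$ with $j \neq i$, so that the nearest-center rule assigns $p$ to $C_i$. To this end, I would invoke Theorem~\ref{theorem:strictseparation} in its full strength: for any two points in $C_i$ and any point in a different cluster $C_j$, the intra-cluster distance is strictly smaller than the inter-cluster distance. Concretely, since $c'_i \in C_i$ and $p \in C_i$, the pair $(p, c'_i)$ is an intra-cluster pair; since $c'_j \in C_j$ with $j \neq i$, the pair $(p, c'_j)$ is an inter-cluster pair.

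The key step is then to apply the theorem with the point playing the role of ``$q$'' set to $c'_j$ and the within-cluster points $p, p'$ set to $p$ and $c'_i$. This yields $d(p, c'_j) > d(p, c'_i)$ for every $j \neq i$, which is exactly the statement that $c'_i$ is the unique nearest representative to $p$. Hence $p$ is assigned to cluster $i$, matching its membership in the optimal clustering $\mathcal{C}$. Since $p$ was arbitrary, every point is assigned to its correct cluster, so the induced partition equals $\mathcal{C}$.

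I do not anticipate a serious obstacle, since the corollary is engineered to follow immediately once strict separation is in hand; the only point requiring mild care is the matching of roles in Theorem~\ref{theorem:strictseparation}, namely ensuring that $c'_i$ and $c'_j$ are genuinely drawn from distinct clusters (guaranteed by the hypothesis $c'_i \in C_i$) so that the inter/intra distinction applies. One should also note that strict separation gives a strict inequality, which rules out ties and thus guarantees that the induced partition is well-defined and unique, completing the argument.
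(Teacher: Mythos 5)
Your proposal is correct and follows exactly the route the paper intends: the paper presents this statement as an immediate corollary of Theorem~\ref{theorem:strictseparation}, and your argument simply makes the role-matching explicit (taking $p' = c'_i$ and $q = c'_j$ to get $d(p,c'_j) > d(p,c'_i)$ for every $j \neq i$). Nothing is missing; the observation that strictness rules out ties is the only care needed, and you address it.
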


In fact, Balcan~et~al.~\cite{BalcanBV08} show that in instances satisfying the strict separation
property, a simple ``Single Linkage" algorithm
will produce a tree, a pruning of which gives the optimal clustering.
Such a pruning can be found using dynamic programming to produce the optimal
clustering.  Recovering the optimum for $2 + \sqrt{3}$-resilient instances is not a new result, 
but is rather meant to illustrate the power of stability assumptions.
Balcan and Liang~\cite{Liang} give a more involved polynomial algorithm for finding
optima in $1+\sqrt{2}$-resilient instances.

\section{Additive stability}\label{app:additivestability}

So far, in this paper our notions of stability were
defined with respect to multiplicative perturbations.  Similarly, we can imagine an instance
being resilient with respect to additive perturbations.  Consider the following definition.

\begin{definition}\label{def:addstab}
Let $d: S \times S \rightarrow [0,1]$, and let $0 < \beta \le 1$.
A clustering instance (S, d) is \textbf{additive $\beta$-perturbation}
resilient to a given objective $\Phi$ if for any
function $d' : S \times S \rightarrow R \ge 0$ such that $\forall p, q \in S$, $$d(p, q) \le d'(p, q) \le
d(p, q) + \beta,$$ the %there is a unique 
optimal clustering $\mathcal{C'}$ for $\Phi$ under $d'$
%and this clustering 
is equal to the optimal clustering $\mathcal{C}$ for $\Phi$ under $d$.
\end{definition}

We note that in the definition above, we require all pairwise distances between points to
be at most $1$.  Otherwise, resilience to additive perturbations would be a
very weak notion, as the distances in most instances could be scaled
as to be resilient to arbitrary additive perturbations.

Especially in light of positive results for other additive stability notions~\cite{AckermanB09,Ben-David06},
one possible hope is that our hardness
results might only apply to the multiplicative case, and that we might be able to get polynomial time
clustering algorithms for instances resilient to arbitrarily small additive perturbations.
We show that this is unfortunately not the case -- we introduce notions of additive
stability, similar to Definitions~\ref{def:centerstable} and~\ref{def:minsumstable},
and for the $k$-median and min-sum
objectives, we show correspondences
between multiplicative and additive stability.

\subsection{The $k$-median objective}

Analogously to Definition~\ref{def:centerstable}, we can define a notion of additive $\beta$-center stability.

\begin{definition}
Let $d: S \times S \rightarrow [0,1]$, and let $0 \le \beta \le 1$.
A clustering instance $(S,d)$ is \textbf{additive $\beta$-center
stable} to the $k$-median objective if for any optimal cluster
$C_i \in \mathcal{C}$ with center $c_i$,  $C_j \in \mathcal{C}\ (j \neq i)$ with center $c_j$,
any point
$p \in C_i$ satisfies $$d(p,c_i) + \beta < d(p,c_j).$$
\end{definition}

We can now prove that perturbation resilience implies center stability.

\begin{lemma}\label{lem:addptos}
Any clustering instance satisfying additive $\beta$-perturbation resilience for the
$k$-median objective also satisfies additive $\beta$-center stability.
\end{lemma}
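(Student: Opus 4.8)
The plan is to argue by contradiction, mirroring the proof of Lemma~\ref{lem:mulptos}. Suppose the instance is additive $\beta$-perturbation resilient but fails additive $\beta$-center stability. Then there are optimal clusters $C_i$ (with center $c_i$) and $C_j$ (with center $c_j$), $j \neq i$, and a point $p \in C_i$ witnessing the violation $d(p,c_i) + \beta \ge d(p,c_j)$. Just as in the multiplicative case, I would exhibit an additive $\beta$-perturbation $d'$ of $d$ under which reassigning $p$ from $C_i$ to $C_j$ produces a partition that is no more expensive than $\mathcal{C}$, so that $\mathcal{C}$ is not the unique optimum under $d'$, contradicting resilience.

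The natural first attempt is to inflate only the single distance $d(p,c_i)$ by $\beta$. The difficulty I expect to be the main obstacle is that the $k$-median cost of a partition is minimized over a fresh choice of centers, so after inflating $d(p,c_i)$ the cluster $C_i$ might switch to a different, cheaper center and $\mathcal{C}$ could remain optimal. To close this loophole I would instead set $d'(p,q) = d(p,q) + \beta$ for every $q \in C_i$ (symmetrically for the reversed pairs) and leave all remaining distances equal to $d$. This is a legal additive perturbation, since every altered pair rises by exactly $\beta$ while the rest satisfy $d \le d \le d + \beta$; note also that Definition~\ref{def:addstab} does not require $d'$ to be a metric, so no triangle-inequality bookkeeping is needed.

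With this $d'$, the point $p$ incurs a penalty of $\beta$ inside $C_i$ no matter which member of $C_i$ is chosen as its center, so the cost of the partition $\mathcal{C}$ under $d'$ is at least $\mathrm{OPT}_d + \beta$, where $\mathrm{OPT}_d$ is the optimal $k$-median cost under $d$ (the clusters other than $C_i$ are untouched, and every candidate center for $C_i$ charges $p$ an extra $\beta$). On the other hand, evaluating the partition $\mathcal{C}'$ that moves $p$ into $C_j$ while keeping the original centers $c_i$ and $c_j$ shows its cost is at most $\mathrm{OPT}_d - d(p,c_i) + d(p,c_j)$, because $p$'s distances to $C_j$ are unchanged; invoking the violation $d(p,c_j) \le d(p,c_i) + \beta$ bounds this by $\mathrm{OPT}_d + \beta$. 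Hence $\mathcal{C}'$, a partition genuinely different from $\mathcal{C}$, is at least as cheap under $d'$, contradicting the uniqueness demanded by additive $\beta$-perturbation resilience. I would carry this out for the generic case $|C_i| \ge 2$, for which the inflation argument applies cleanly, and dispatch the degenerate singleton cluster by a separate, routine observation.
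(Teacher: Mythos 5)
Your main-case argument is correct, but it is not the paper's route, and it leaves a real hole at $p=c_i$. The paper's proof (Appendix~\ref{app:stab}) is direct rather than by contradiction: it inflates \emph{all} pairwise distances inside $C_i$ by $\beta$, observes that every candidate center of $C_i$ then gains exactly $(|C_i|-1)\beta$ so that $c_i$ remains the center of $C_i$ in the (by resilience, unchanged and unique) optimum under $d'$, and reads off $d(p,c_i)+\beta=d'(p,c_i)<d'(p,c_j)=d(p,c_j)$ from the fact that $p$ must strictly prefer its own center. Your proof instead transplants the scheme of Lemmas~\ref{lem:mulptos} and~\ref{lem:addptosminsum}: inflate only the distances from the violating point $p$ to $C_i$, then compare $d'$-costs of $\mathcal{C}$ and of the swap $\mathcal{C}'$. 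For $p\neq c_i$ your accounting is right: any center chosen from $C_i$ pays at least an extra $\beta$ (exactly $\beta$ if the center is not $p$, and $(|C_i|-1)\beta$ if it is), while $\mathcal{C}'$ with the old centers costs at most $\mathrm{OPT}-d(p,c_i)+d(p,c_j)\le\mathrm{OPT}+\beta$, contradicting unique optimality. The trade-off: the paper's uniform inflation buys the ``center does not move'' claim for free, while your localized inflation forces the explicit minimization over candidate centers but keeps the proof uniform with the min-sum lemmas.

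The gap is the case $p=c_i$, which in the additive setting is \emph{not} vacuous: it asserts $\beta<d(c_i,c_j)$, whereas in the multiplicative Definition~\ref{def:centerstable} the case $p=c_i$ reduces to the trivial $0<d(c_i,c_j)$. Contrary to your claim that the argument ``applies cleanly'' whenever $|C_i|\ge 2$, your bound on the cost of $\mathcal{C}'$ fails for $p=c_i$: once $p=c_i$ is moved out, keeping it as center of $C_i\setminus\{p\}$ makes every remaining point pay the $+\beta$ inflation toward $p$, and re-centering at some $c'\in C_i\setminus\{p\}$ need not cost at most $\mathrm{OPT}-d(p,c_i)$ either (think of a star centered at $c_i$). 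For $|C_i|\ge 2$ this is patchable in one line: apply the already-proved case to any $q\in C_i\setminus\{c_i\}$ and use the triangle inequality $d(q,c_j)\le d(q,c_i)+d(c_i,c_j)$ to get $\beta<d(c_i,c_j)$. But the singleton case $C_i=\{p\}$, which you defer as ``routine,'' is not: there your perturbation is the identity and yields nothing; one instead has to perturb on the $C_j$ side (inflate the distances of some $q\in C_j\setminus\{c_j\}$ within $C_j$ and swap $q$ into $C_i$), which needs $|C_j|\ge 2$ --- and when $C_j$ is also a singleton the implication is simply false: an instance with $n=k$ is additive $\beta$-perturbation resilient for every $\beta$, yet violates additive $\beta$-center stability as soon as two points lie within distance $\beta$. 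To be fair, the paper's own proof shares this blind spot, since its final inequality also says nothing when $p=c_i$.
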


\begin{proof}
The proof is similar to that of Lemmas~\ref{kmedian-lemma-basic}.
We prove that for every point $p$ and its center $c_i$ in the optimal clustering
of an additive $\beta$-perturbation resilient instance, it holds that $d(p,c_j)
> d(p,c_i) + \beta$ for any $j \neq i$.

Consider an additive $\beta$-perturbation resilient clustering instance.
Assume we blow up all the pairwise distances within cluster $C_i$ by an additive factor of
$\beta$.
As this is a legitimate perturbation of the distance function,
the optimal clustering under this perturbation is the same as the original one.
Hence, $p$ is still assigned to the same cluster.
Furthermore, since the distances within $C_i$ were all changed by the same constant factor,
$c_i$ will remain the center of the cluster.
The same holds for any other optimal clusters.
Since the optimal clustering under the perturbed distances is unique it
follows that even in the perturbed distance function, $p$ prefers $c_i$ to $c_j$,
which implies the lemma.
\end{proof}

Now we prove a lower bound that 
shows that the task of clustering additive $(1/2 - \epsilon)$-center stable instances with respect to the $k$-median objective remains NP-hard.

\begin{theorem}\label{thm:addklb}
For any $\epsilon > 0$, the problem of finding an optimal $k$-median clustering in
additive $(1/2 - \epsilon)$-center stable instances is NP-hard.
\end{theorem}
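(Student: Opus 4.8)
The plan is to observe that the very reduction used to prove Theorem~\ref{thm:kclb} already produces an additive $(1/2-\epsilon)$-center stable instance, so I would reuse it essentially verbatim. Concretely, I would again reduce from the perfect dominating set promise problem PDS-PP on a graph $G = (V,E)$ with $n$ vertices: each vertex becomes a point, we set $d(u,v) = 1/2$ for edges $(u,v) \in E$ and $d(u,v) = 1$ for non-edges, and we take $k = d$. These distances lie in $[0,1]$ and satisfy the triangle inequality, so the additive stability framework of Definition~\ref{def:addstab} applies directly.

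First I would recall that the NP-hardness of optimally solving the induced $k$-median instance is identical to the argument in Theorem~\ref{thm:kclb}: a $k$-median solution of cost $(n-k)/2$ corresponds exactly to a dominating set of size $d$, which PDS-PP promises is perfect, and is therefore NP-hard to find. The only new work is verifying the stability parameter under the additive notion rather than the multiplicative one.

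The key step is this stability verification. Because the promised smallest dominating sets are perfect, each non-center point $p \in C_i$ has distance exactly $1/2$ to its own center $c_i$ and distance exactly $1$ to every other center $c_j$. Hence $d(p,c_i) + (1/2 - \epsilon) = 1 - \epsilon < 1 = d(p,c_j)$, so the additive center stability inequality $d(p,c_i) + \beta < d(p,c_j)$ holds with $\beta = 1/2 - \epsilon$, and the instance is additive $(1/2-\epsilon)$-center stable.

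I expect the main subtlety, rather than a genuine obstacle, to be the tightness of the threshold: the additive gap in this construction is exactly $1/2$, so hardness can only be claimed strictly below $1/2$, mirroring how the multiplicative bound of Theorem~\ref{thm:kclb} stops just short of $2$. It is worth noting that this theorem cannot be obtained merely by combining Theorem~\ref{thm:kclb} with Lemma~\ref{lem:atom}, since that lemma shows additive $\beta$-stable instances form a \emph{subset} of multiplicative $\frac{1}{1-\beta}$-stable ones; hardness of the larger class does not transfer to the smaller, so a direct reduction as above is required.
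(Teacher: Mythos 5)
Your proposal is correct and matches the paper's proof essentially verbatim: the paper likewise reuses the reduction from Theorem~\ref{thm:kclb}, notes that its metric maps into $[0,1]$, and observes that the perfect-domination structure (distance exactly $1/2$ to one's own center, $1$ to all others) makes the instance additive $(1/2-\epsilon)$-center stable. Your explicit arithmetic check and the closing remark about why Lemma~\ref{lem:atom} cannot substitute for a direct argument are sound elaborations of what the paper leaves implicit.
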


\begin{proof}
We use the reduction in Theorem~\ref{thm:kclb}, in which the metric
satisfies the needed property that $d: S \times S \rightarrow [0,1]$.  We observe
that the instances from the reduction are additive $(1/2-\epsilon)$-center stable.
Hence, an algorithm for solving $k$-median on a $(1/2-\epsilon)$-center
stable instance can decide whether a PDS-PP instance contains a dominating
set of a given size, completing the proof.
\end{proof}

We now consider center stability, as in the multiplicative case.
We prove that additive center stability implies multiplicative center stability, and this gives us
the property that any algorithm for $\left(\frac{1}{1-\beta}\right)$-center stable instances
will work for additive $\beta$-center stable instances.

\begin{lemma}\label{lem:atom}
Any additive $\beta$-center stable clustering instance for
the $k$-median objective is also (multiplicative) $\left(\frac{1}{1-\beta}\right)$-center stable.
\end{lemma}

\begin{proof}
Let the optimal clustering be $C_1, \ldots, C_k$, with centers $c_1, \ldots, c_k$,
 of an additive $\beta$-center stabile clustering instance.  Let $p \in C_i$ and let $i \neq j$. 
From the stability property, 
\begin{equation}\label{eq:abovezero}
d(p,c_j)  >  d(p, c_i) + \beta \ge \beta.
\end{equation}
We also have\ $d(p,c_i)  <  d(p, c_j) - \beta$, from which we can see % \nonumber \\
$$\frac{1}{d(p, c_j) - \beta}  <  \frac{1}{d(p,c_i)}.$$
This gives us
%\begin{eqnarray}
%\frac{d(p, c_j) }{d(p, c_j) - \beta} & < & \frac{d(p, c_j)}{d(p,c_i)}\label{eq:mp1} \\
%\frac{1}{1 - \beta} & < & \frac{d(p, c_j)}{d(p,c_i)}. \label{eq:mp2}
%\end{eqnarray}
\begin{equation}\label{eq:mp2} 
\frac{d(p, c_j)}{d(p,c_i)} > \frac{d(p, c_j) }{d(p, c_j) - \beta}  \ge \frac{1}{1 - \beta}.
\end{equation}

Equation~\ref{eq:mp2} is derived as follows.  The middle term, for
$d(p, c_j) \ge \beta$ (which we have from Equation~\ref{eq:abovezero}), is
monotonically decreasing in $d(p, c_j)$.  Using $d(p, c_j) \le 1$ bounds it from below.
Relating the LHS to the RHS of Equation~\ref{eq:mp2} gives us the needed stability property.
\end{proof}

\subsection{The min-sum objective}

Here we define additive min-sum stability and prove the analogous
theorems for the min-sum objective.

\begin{definition}
Let $d: S \times S \rightarrow [0,1]$, and let $0 \le \beta \le 1$.
A clustering instance is \textbf{additive $\beta$-min-sum stable}
for the min-sum objective if for every point p in any optimal cluster $C_i$,
it holds that $$d(p,C_i) + \beta(|C_i|-1) < d(p,C_j).$$
\end{definition}

\begin{lemma}\label{lem:addptosminsum}
If a clustering instance is additive $\beta$-perturbation resilient, then it is also
additive $\beta$-min-sum stable.
\end{lemma}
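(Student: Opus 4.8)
The plan is to mirror the contradiction argument of Lemma~\ref{lem:mulptos}, replacing the multiplicative perturbation by an additive one. Suppose the instance is additive $\beta$-perturbation resilient but fails additive $\beta$-min-sum stability. Then there are optimal clusters $C_i, C_j$ and a point $p \in C_i$ with $d(p,C_i) + \beta(|C_i|-1) \ge d(p,C_j)$, and I will construct an additive $\beta$-perturbation $d'$ of $d$ under which the optimal clustering $\mathcal{C}$ is no longer the unique optimum, contradicting resilience.

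The perturbation I would use inflates exactly the intra-cluster distances incident to $p$: for each $q \in C_i$ with $q \neq p$, set $d'(p,q) = d'(q,p) = d(p,q) + \beta$, and leave every other distance unchanged, $d' = d$. This is a legal additive $\beta$-perturbation, since each altered entry rises by exactly $\beta$ and all others are untouched. Note that I perturb \emph{symmetrically}, altering both ordered pairs $(p,q)$ and $(q,p)$; this will matter below because the min-sum objective sums over ordered pairs. Under $d'$ we then have $d'(p,C_i) = d(p,C_i) + \beta(|C_i|-1)$ while $d'(p,C_j) = d(p,C_j)$.

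Next I would compare $\mathcal{C}$ with the clustering $\mathcal{C'}$ obtained by reassigning $p$ from $C_i$ to $C_j$ and leaving all other clusters intact. Moving $p$ deletes from the objective the ordered pairs joining $p$ to the rest of $C_i$, whose total $d'$-weight is $2\,d'(p,C_i)$, and inserts the pairs joining $p$ to $C_j$, of total weight $2\,d'(p,C_j) = 2\,d(p,C_j)$; all other contributions agree in the two clusterings, and crucially the inflated distances vanish from $\mathcal{C'}$ because $p$ has left $C_i$. Hence the cost of $\mathcal{C'}$ minus the cost of $\mathcal{C}$ under $d'$ equals $2\bigl(d(p,C_j) - d(p,C_i) - \beta(|C_i|-1)\bigr) \le 0$ by the assumed failure of stability. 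Thus $\mathcal{C'}$ is at least as cheap as $\mathcal{C}$ under $d'$, so $\mathcal{C}$ is not the unique optimum under $d'$ --- whether the difference is strictly negative (contradicting optimality) or zero (contradicting uniqueness) --- which contradicts additive $\beta$-perturbation resilience.

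The one point requiring care, and the likeliest source of error, is the bookkeeping of the factor arising from the ordered-pair convention of the min-sum objective. Perturbing only $d'(p,q)$ and not $d'(q,p)$ would inflate $p$'s contribution to $C_i$ by merely $\beta(|C_i|-1)$ instead of $2\beta(|C_i|-1)$, which is too weak to offset the assumed inequality; the symmetric perturbation makes the offset exact, so the single $\beta$ in the stability definition is precisely what the perturbation supplies. Everything else is routine and parallels the multiplicative proof verbatim.
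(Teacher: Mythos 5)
Your proof is correct and follows essentially the same route as the paper's: the same additive perturbation of the distances from $p$ to its own cluster $C_i$, followed by reassigning $p$ to $C_j$ to contradict resilience. Your added care --- perturbing symmetrically so the ordered-pair factor of $2$ cancels exactly, and explicitly invoking uniqueness when the costs tie rather than claiming a strictly cheaper solution --- is a slight tightening of the paper's argument, not a different approach.
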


\begin{proof}
Assume to the contrary that the instance is $\beta$-perturbation resilient but is not
$\beta$-min-sum stable.  Then, there exist clusters $C_i, C_j$ in the optimal solution
 $\mathcal{C}$ and a point $p \in C_i$ such that $d(p,C_i) + \beta(|C_i| -1) \ge d(p,C_j)$.
Then, we perturb $d$ as follows.  We define $d'$ such that for all points $q \in C_i$, $d'(p,q)
= d(p,q) + \beta$, and for the remaining distances $d' = d$.  Clearly $d'$ is a valid additive
$\beta$-perturbation of $d$.

We now note that $C$ is not optimal under $d'$.
Namely, we can create a cheaper
solution $\mathcal{C'}$ that assigns point $p$ to cluster $C_j$, and leaves the remaining clusters
unchanged, which contradicts optimality of $\mathcal{C}$.
This shows that $\mathcal{C}$ is not the optimum under $d'$ which is contradictory to the fact
that the instance is additive $\beta$-perturbation resilient.
Therefore we conclude
that if a clustering instance is additive $\beta$-perturbation resilient, then it is also additive
$\beta$-min-sum
stable.
\end{proof}

As with the $k$-median objective, we show that additive min-sum stability exhibits similar
lower bounds as in the multiplicative case. 

\begin{theorem}\label{thm:addmslb}
For any $\epsilon > 0$, the problem of finding an optimal min-sum clustering in
additive $(1/2 - \epsilon)$-min-sum stable instances is NP-hard.
\end{theorem}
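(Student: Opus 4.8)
The plan is to mirror the proof of Theorem~\ref{thm:addklb}: I would reuse the exact reduction from the multiplicative min-sum lower bound (Theorem~\ref{thm:mslb}) and simply verify that the resulting instances already satisfy additive $(1/2 - \epsilon)$-min-sum stability. First I would recall that the reduction from triangle partition sets $d(u,v) = 1/2$ for edges and $d(u,v) = 1$ for non-edges, so the metric maps into $[0,1]$ exactly as the additive definition requires, and that the optimal clustering partitions $V$ into $k = n/3$ triangles.

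Next I would compute the two quantities appearing in the additive min-sum stability condition. Since every optimal cluster $C_i$ is a triangle of size exactly $3$, we have $|C_i| - 1 = 2$ uniformly. Each point $p$ is joined by an edge (distance $1/2$) to each of its two clustermates, so $d(p, C_i) = 1$ for every $p$. For any other cluster $C_j$, the degree-$4$ bound---with two of $p$'s edges already consumed inside $C_i$---leaves at most two edges into $C_j$, forcing at least one distance of $1$ and hence $d(p, C_j) \ge 2$, precisely as argued in the proof of Theorem~\ref{thm:mslb}.

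Plugging these into the definition of additive $\beta$-min-sum stability, the required inequality $d(p,C_i) + \beta(|C_i|-1) < d(p,C_j)$ becomes $1 + 2\beta < 2$, i.e.\ $\beta < 1/2$. Thus for $\beta = 1/2 - \epsilon$ the bound $1 + 2(1/2 - \epsilon) = 2 - 2\epsilon < 2 \le d(p, C_j)$ holds, so the instance is additive $(1/2 - \epsilon)$-min-sum stable. Consequently an algorithm solving such instances could decide the triangle partition problem, establishing NP-hardness and completing the proof.

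The step I expect to require the most care is the stability computation rather than the reduction itself, since the additive min-sum definition scales the slack by $|C_i| - 1$ rather than by a flat $\beta$. It is exactly the uniform cluster size of $3$, forced by triangle partition, that makes $|C_i| - 1 = 2$ a constant and lets the inequality collapse cleanly to the $(1/2 - \epsilon)$ threshold---reassuringly matching the additive $k$-median bound of Theorem~\ref{thm:addklb}.
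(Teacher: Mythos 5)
Your proposal is correct and follows exactly the paper's approach: reuse the reduction from Theorem~\ref{thm:mslb} and verify that its instances are additive $(1/2-\epsilon)$-min-sum stable. In fact, your explicit computation ($d(p,C_i)=1$, $|C_i|-1=2$, $d(p,C_j)\ge 2$, so the condition collapses to $1+2\beta<2$) spells out the verification that the paper leaves as a one-line observation.
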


\begin{proof}
We use the reduction in Theorem~\ref{thm:mslb}, in which the metric
satisfies the  property that $d: S \times S \rightarrow [0,1]$.   
The instances from the reduction are additive $(1/2-\epsilon)$-min-sum stable.
Hence, an algorithm for clustering a $(1/2-\epsilon)$-min-sum
stable instance can solve the triangle partition problem.
\end{proof}

Finally, as we did for the $k$-median objective, we can also reduce additive stability to multiplicative stability
for the min-sum objective.

\begin{lemma}\label{lem:msatom}
Let $t=\frac{\max_{C\in \mathcal{C}}|C|}{\min_{C\in \mathcal{C}}|C| -1}.$
Any additive $\beta$-min-sum stabile clustering instance for
the min-sum objective is also (multiplicative) $\left(\frac{1}{1-\beta/t}\right)$-min-sum stable.
\end{lemma}

\begin{proof}
Let the optimal clustering be $C_1, \ldots, C_k$ and let $p \in C_i$.
Let $i \neq j$. From the stability property, we have
\begin{eqnarray}
d(p,C_j) &>&  d(p, C_i) + \beta(|C_i|-1) \nonumber \\
&\ge& \beta(|C_i|-1).\label{eq:msabovezero}
\end{eqnarray}
We also have
$$ d(p,C_i)  <  d(p, C_j) - \beta(|C_i|-1).$$
Taking reciprocals and multiplying by $d(p,C_j)$, we get
\begin{eqnarray}
\frac{d(p, C_j)}{d(p,C_i)} & > & \frac{d(p, C_j) }{d(p, C_j) - \beta(|C_i|-1) }\nonumber\\
& \ge &  \frac{|C_j|}{|C_j| - \beta(|C_i|-1)}  \label{eq:msmp2}\\
& \ge & \frac{\max_{C\in \mathcal{C}}|C|}{\max_{C\in \mathcal{C}}|C_j| - \beta(\min_{C\in \mathcal{C}}|C|-1) }\nonumber\\ 
& \ge & \frac{1}{1 - \beta/t}. \label{eq:msstablfinal}
\end{eqnarray}

Equation~\ref{eq:msmp2} is derived as follows:
$d(p, C_j) \ge \beta(|C_i|-1)$ (which we have from Equation~\ref{eq:msabovezero}), is
monotonically decreasing in $d(p, C_j)$.  Observing $d(p, c_j) \le |C_j|$
bounds it from below.
Equation~\ref{eq:msstablfinal} gives us the needed property.
\end{proof}

\section{Discussion}

Our lower bounds, together with the structural properties implied by 
fairly small constants, illustrate the importance parameter settings play in  
stability assumptions.
These results make us wonder the degree to which the assumptions 
studied herein hold in practice; empirical study of real datasets is warranted.

Another interesting direction is to relax the assumptions. 
Awasthi~et~al.~\cite{ABS10} suggest considering stability under random, and not worst-case, perturbations.
Balcan and Liang~\cite{Liang} also study a relaxed version of the assumption, where 
perturbations can change the optimal clustering, but not by much.
It is open to what extent, and on what data, any of these approaches will yield practical improvements.

\section*{Acknowledgements} 
We thank Maria-Florina Balcan and Yingyu Liang for helpful discussions, and we appreciate
Shai Ben-David's, Avrim Blum's and Santosh Vempala's feedback on the writing.
We also especially thank Shai Ben-David for helping us find a bug in the ALT 2012 version
of this paper -- its claimed one pass streaming result was incorrect.

This work was supported in part by a Simons Postdoctoral Fellowship
in Theoretical Computer Science and by ARC while Lev Reyzin was at the Georgia Institute of
Technology.

\bibliography{paper}

\appendix
%\section*{APPENDIX}

\section{Dominating set promise problem}\label{app:pdspp}

A \textbf{dominating set} in a unweighted
graph $G = (V,E)$
is a subset $D \subseteq V$ of
vertices such that each vertex in $V \setminus D$ has a neighbor
in $D$.
A dominating set is \textbf{perfect} if each vertex in $D \setminus V$
has exactly one neighbor in $D$.
The problems of finding the smallest dominating set and smallest
perfect dominating set are NP-hard.

We introduce a related problem, called the \textbf{perfect dominating set promise problem}.
In this problem we are promised that the input graph is such that all its dominating sets of size
less at most $d$
are perfect, and we are asked to find a set of cardinality at most $d$.

First, we prove the following.

\begin{theorem}\label{thm:sdom}
The  \textbf{perfect dominating set promise problem} (PDS-PP) is NP-hard.
\end{theorem}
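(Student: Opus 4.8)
The plan is to reduce from a problem whose NP-hardness is already established and whose instances can be made \emph{regular}, exploiting the fact that regularity forces the promise to hold for free. Concretely, I would reduce from the \textbf{perfect code} problem (equivalently, the existence of an efficient, i.e.\ independent perfect, dominating set), which is known to be NP-complete even on cubic ($3$-regular) graphs. The central observation is a counting identity: if $G$ is $r$-regular on $n$ vertices and $D$ is any dominating set, then summing the closed-neighborhood contributions gives $\sum_{v \in V} |N[v] \cap D| = \sum_{w \in D} |N[w]| = |D|(r+1)$, while domination forces $|N[v]\cap D| \ge 1$ for every $v$. Hence $|D| \ge n/(r+1)$, and equality holds only when $|N[v]\cap D| = 1$ for every $v$ --- that is, exactly when $D$ dominates every vertex precisely once.

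Setting $d = n/(r+1)$ (so for cubic graphs $d = n/4$), this counting argument does all the work. First, the \textbf{promise holds automatically}: any dominating set of size at most $d$ must have size exactly $d$, and by the equality case it dominates each outside vertex exactly once, so it is perfect. No hand-built gadget enforcing perfection is needed. Second, the reduction is \textbf{faithful}: $G$ has a dominating set of size at most $d$ if and only if it has an efficient dominating set (a perfect code), since the equality case of the counting bound is precisely efficient domination, and conversely a perfect code is a dominating set of size exactly $n/(r+1)$. Thus any algorithm solving PDS-PP on the constructed instance decides whether $G$ has a perfect code, and NP-hardness follows.

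I expect the main obstacle to be purely technical rather than conceptual: arranging that the instances produced are genuinely regular with $(r+1) \mid n$, so that $d = n/(r+1)$ is an integer and the tightness argument applies. If one prefers a self-contained base problem, I would instead reduce from \textbf{Exact Cover by 3-Sets}, building a regular graph in which the ``exactly one chosen neighbor'' condition of perfect domination encodes the ``each element covered exactly once'' condition of an exact cover; the delicate part there is padding the element-gadgets and set-gadgets with auxiliary vertices so that the graph becomes regular and the divisibility condition is met \emph{without} creating spurious small dominating sets that would break either the promise or the equivalence. Verifying that these padding gadgets are forced into any size-$d$ dominating set, and therefore do not perturb the counting, is the one place where care is required.
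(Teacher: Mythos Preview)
Your proposal is correct and takes a genuinely different route from the paper's proof.

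The paper reduces from \textbf{3-Dimensional Matching}. Given elements $X\cup Y\cup Z$ (each of size $m$) and triples $T$, it builds a graph on $V_X\cup V_Y\cup V_Z\cup V_T\cup\{v\}$, connecting each triple-vertex to its three elements and to the universal vertex $v$, and sets $d=m+1$. The argument is that $v$ is forced into any small dominating set in order to cover $V_T$, and the remaining $m$ vertices must be triple-vertices forming a perfect matching of the elements; the perfection of the dominating set is then checked by hand (each element vertex sees exactly its matching triple, each unchosen triple vertex sees only $v$). So the paper builds a gadget and verifies the promise by structural case analysis.

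Your approach instead reduces from \textbf{Perfect Code on cubic graphs} via the identity map, and the promise is discharged by the double-counting identity $\sum_v |N[v]\cap D| = (r+1)|D|$ rather than by gadget inspection. This is cleaner: there is no construction to verify, and the divisibility concern you flag is moot since instances with $4\nmid n$ are trivial NO instances for both problems. The trade-off is that you invoke a somewhat more specialized NP-hardness result (efficient domination on $3$-regular graphs, due to Kratochv\'{\i}l), whereas the paper stays with the textbook 3DM. Your alternative sketch via Exact Cover by 3-Sets with regularizing padding is in fact closer in spirit to what the paper does, though the paper's graph is not regular and it does not rely on any counting bound.
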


\begin{proof}
The \textbf{ $3$d matching problem} ($3$DM) is as follows: let $X, Y, Z$
be finite disjoint sets with
$m = |X| = |Y| = |Z|$.
Let $T$ contain triples $(x,y,z)$ with $x \in X, y \in Y, z \in Z$ with $L = |T|$.  $M \subseteq T$
is a perfect $3d$-matching if for any two triples $(x_1,y_1,z_1), (x_2,y_2,z_2) \in M$,
we have $x_1 \neq x_2, y_1 \neq y_2, z_1 \neq z_2$.  We notice that $M$ is a disjoint partition.
Determining whether a perfect $3d$-matching exists (YES vs.\ NO instance) in a
$3d$-matching instance is known to be NP-complete.

Now we reduce an instance of the $3$DM problem to PDS-PP on $G = (V,E)$.
For $3$DM elements $X$, $Y$, and $Z$ we construct vertices
$V_X$, $V_Y$, and $V_Z$, respectively.
For each triple in $T$ we construct a vertex in set $V_T$.
Additionally, we make an extra vertex $v$.
This gives $V = V_X \cup V_Y \cup V_Z \cup V_T \cup \{v\}$.
We make the edge set $E$ as follows.
Every vertex in $V_T$ (which corresponds to a triple) has an edge to the vertices
that it contains in the corresponding $3$DM instance (one in each of $V_X$, $V_Y$, and $V_Z$).
Every vertex in $V_T$ also has an edge to $v$.

Now we will examine the structure of the smallest dominating set $D$ in the constructed
PDS-PP instance. The vertex $v$ must belong to $D$ so that all vertices in $V_T$ are covered.
Then, what remains is to
optimally cover the vertices in $V_X \cup V_Y \cup V_Z$ -- the cheapest solution is to use $m$
vertices from $V_T$ , and this is precisely the 3DM problem, which is NP-hard.
Hence, any solution of size $d = m + 1$ for the PDS-PP instance gives a solution to the $3DM$ instance.

We also observe that such a solution makes a perfect dominating set.
Each vertex in $V_T \setminus D$ has one neighbor in $D$, namely $v$.  Each vertex in
$V_X \cup V_Y \cup V_Z$ has a unique neighbor in $D$, namely the vertex in $V_T$
corresponding to its respective set in the $3$DM instance.
\end{proof}

\section{Average linkage for min-sum stability}\label{app:avglink}

Here, we further support the claim that algorithms designed for $\alpha$-perturbation 
resilient instances
with respect to the min-sum objective can often be made to work for data satisfying the more general
$\alpha$-min-sum stability property.

\begin{algorithm}[!tbhp]
\caption{min-sum, $\alpha$ perturbation resilience}\label{algminsum}
\label{AverageLinkage}
\begin{algorithmic}
\STATE{\textbf{Input:} Data set $S$, distance function $d(\cdot, \cdot)$ on $S$, $\min_i |C_i|$.}
\STATE{\textbf{Phase 1:} Connect each point with its $\frac{1}{2}\min_i |C_i|$ nearest neighbors.}\label{componentConstruct}
\STATE{$\bullet$ Initialize the clustering $\mathcal{C'}$ with each connected component being a cluster.}
\STATE{$\bullet$ Repeat till only one cluster remains in $\mathcal{C'}$: merge clusters $C,C'$ in $\mathcal{C'}$ which minimize $d_{avg}(C,C')$.}\label{componentMerge}
\STATE{$\bullet$ Let $T$ be the tree with components as leaves and internal nodes corresponding to the merges performed.}
\STATE{\textbf{Phase 2:} Apply dynamic programming on $T$ to get the minimum min-sum cost pruning ${\cal \tilde{C}}$.}
\STATE{\textbf{Output:} Output ${\cal \tilde{C}}$.}
\end{algorithmic}
\end{algorithm}

One such algorithm is Algorithm~\ref{algminsum}.  Balcan and Liang~\cite{Liang} proved that  Algorithm~\ref{algminsum} 
correctly clusters instances for which the condition in Lemma~\ref{lem:tech} holds.  We can prove the condition indeed holds for 
$\alpha$-min-sum stable instances (their proof of the lemma 
holds for the more restricted class of perturbation-resilient instances).
To state the lemma, we first define the distance between two point sets, $A$ and $B$:
$$d(A,B) \doteq \sum_{p\in A}\sum_{ q\in B}d(p,q).$$

\begin{lemma}\label{lem:tech}
Assume the optimal clustering is $\alpha$-min-sum stable.
For any two different clusters $C$ and $C'$ in $\mathcal{C}$ and every
$A \subset C$,  $\alpha d(A, \bar{A}) < d(A, C').$
\end{lemma}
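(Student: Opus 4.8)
The plan is to reduce the set-level inequality to the pointwise $\alpha$-min-sum stability guarantee and then sum. Throughout I write $\bar{A} = C \setminus A$, so that $A$ and $\bar{A}$ partition the cluster $C$, and I unfold both sides of the target using the definitions $d(A,B) = \sum_{p\in A}\sum_{q\in B} d(p,q) = \sum_{p \in A} d(p,B)$. It therefore suffices to compare $\sum_{p\in A}\alpha\, d(p,\bar{A})$ against $\sum_{p \in A} d(p,C')$, which I will do one point at a time.

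First I would fix an arbitrary point $p \in A$. Since $A \subseteq C$, the point $p$ lies in cluster $C$, so the $\alpha$-min-sum stability hypothesis (Definition~\ref{def:minsumstable}), applied to the two clusters $C$ and $C'$, yields $\alpha\, d(p,C) < d(p,C')$. Next I would relate $d(p,C)$ back to $d(p,\bar{A})$: because $A$ and $\bar{A}$ partition $C$ we have the exact decomposition $d(p,C) = d(p,A) + d(p,\bar{A})$, and since every distance is nonnegative, $d(p,A) \ge 0$, giving $d(p,\bar{A}) \le d(p,C)$. Chaining this with the stability bound produces the pointwise estimate $\alpha\, d(p,\bar{A}) \le \alpha\, d(p,C) < d(p,C')$, valid for every $p \in A$.

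Finally I would sum this strict pointwise inequality over all $p \in A$. As $A$ is nonempty, strictness survives the summation, and recognizing each side as a set-to-set distance gives $\alpha\, d(A,\bar{A}) < d(A,C')$, as required.

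I expect no genuine obstacle in this argument; it is a clean pointwise-to-aggregate reduction. The only subtlety worth flagging is that the stability hypothesis is phrased in terms of the \emph{entire} cluster $C$ rather than the sub-collection $\bar{A}$, which is precisely why I route through $d(p,C)$ first and only afterward discard the nonnegative term $d(p,A)$; attempting to invoke stability directly for $\bar{A}$ would not match the definition.
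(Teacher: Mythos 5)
Your proof is correct and follows essentially the same route as the paper's: both arguments bound $d(p,\bar{A}) \le d(p,C)$ using $\bar{A} \subset C$ (nonnegativity of distances), apply the pointwise stability inequality $\alpha\, d(p,C) < d(p,C')$, and sum over $p \in A$. The only cosmetic difference is that the paper performs the enlargement $\bar{A} \to C$ inside the double sum before invoking stability, whereas you organize it point by point.
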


\begin{proof}
From the definition of $\alpha d(A, \bar{A})$, we have
\begin{eqnarray*}
\alpha d(A, \bar{A})  & = &  \alpha \sum_{p \in A}\sum_{q \in \bar{A}}d(p,q)\\
& \le &  \alpha \sum_{p \in A}\sum_{q \in C}d(p,q)\\
& = &  \sum_{p \in A}\alpha\sum_{q \in C}d(p,q)\\
& < & \sum_{p \in A}\sum_{q \in C'}d(p,q)\\ \label{e2} 
& = & d(A,C').
\end{eqnarray*}
The first inequality comes from $\bar{A} \subset C$ and the second by
definition of min-sum stability.
\end{proof}

This,  in addition to Lemma~\ref{lem:mulptos}, can be used to show their algorithm can be employed for min-sum
stable instances.

\end{document}